\newtheorem{prop}{Proposition}
\newcommand{\EU}{\ensuremath{\mathbb{E}U}}
\newcommand{\ab}[1]{\ensuremath{(\alpha_{#1}, \beta_{#1})}}
\newcommand{\abi}{\ab{i}}
\begin{document}
%
\title{Legible Normativity for AI Alignment:  The Value of Silly Rules}

\author{Dylan Hadfield-Menell,\textsuperscript{1,2}
McKane Andrus,\textsuperscript{1,2}
Gillian K. Hadfield,\textsuperscript{2,3,4,5}\\
\textsuperscript{1}{Department of Electrical Engineering and Computer Science, University of California Berkeley}\\
\textsuperscript{2}{Center for Human Compatible AI}\\
\textsuperscript{3}Faculty of Law and Rotman School of Management, University of Toronto\\
\textsuperscript{4}{Vector Institute for Artificial Intelligence}\\
\textsuperscript{5}{OpenAI}\\
dhm@eecs.berkeley.edu, mckaneandrus@berkeley.edu, g.hadfield@utoronto.ca}

\maketitle

\begin{abstract}
It has become commonplace to assert that
autonomous agents will have to be built to follow human rules of
behavior--social norms and laws. But human laws
and norms are complex and culturally varied systems; in many cases agents will have to \textit{learn}
the rules. This requires autonomous agents to have models of how human rule systems work so that they can make reliable predictions about rules. In this paper we contribute to the building of such models by analyzing an overlooked distinction between important rules and what we call silly rules \textemdash rules with no discernible direct impact on welfare. We show that silly rules render a normative system both more robust and more adaptable in response to shocks to perceived stability. They make normativity more legible for humans, and can increase legibility for AI systems as well. For AI systems to integrate into human normative systems, we suggest, it may be important for them to have models that include representations of silly rules. 
\end{abstract}

\noindent 

\section{Introduction}
As attention to the challenge of aligning artificial intelligence with human welfare has grown, it has become commonplace to assert that autonomous agents will have to be built to follow human norms and laws \cite{etzioni2016ethics,etzioni2017,IEEE2018}. But this is no easy task. Human groups are thick with rules and norms about behavior, many of which are largely invisible, taken for granted as simply ``the way things are done" by participants \cite{Schutz1964}. They are constituted in complex ways through second-order \emph{normative beliefs}: beliefs about what others believe we should or should not do in some situation \cite{bicchieri2006grammar,bicchieri2017}. Human laws and norms are frequently ambiguous and complicated, they vary widely across jurisdictions, cultures, and groups, they change and adapt. The cases in which they are reducible to formal rule statements is the exception. Even deciding whether a vehicle has violated a numerical speed limit is far from straightforward:  was visibility poor? were there children nearby? Adding to the complexity, rules and norms are enforced both by formal institutions like courts and regulators through costly and error-prone procedures and by the informal behavior of agents through third-party criticism and exclusion or internalization and self-criticism.  This means that what actually counts as a rule can easily diverge from announced or formal rules and that rule-based environments are complex dynamic systems. As a result, we cannot rely on formal rules simply being imposed on agents \textit{a priori}; instead, agents will in many cases have to \textit{learn} the rules and how they work in practice. Normativity--the human practice of classifying some actions as sanctionable and others as not and then punishing people who engage in sanctionable conduct--will have to be \emph{legible} \cite{Dragan2013} to AI systems. 

In this paper, we introduce a distinction between types of rules that can aid in building predictive models to make human normative systems legible to an AI system.  We distinguish between \emph{important} rules and \emph{silly} rules. An important rule is one the observance of which by one agent generates direct payoffs for some other agent(s).  When an agent complies with rules prohibiting speeding, for example, other agents enjoy a material payoff as a direct consequence, such as a reduced probability of accident.  A silly rule, in contrast, in one the observance of which by one agent does not generate any direct material payoff for any other agent. When an agent violates a dress code, for example, such as by failing to wear a head covering in public, no-one is materially affected as a direct consequence of the violation.  Observers might well be offended, and they might punish the violator, but the violation itself is inconsequential.

We ground our claim that the distinction between silly and important rules will prove important to building models for aligned AI using Monte Carlo simulations. We show that silly rules promote robustness and adaptation in groups. Silly rules perform a legibility function for humans--making it easier for them to read the state of the equilibrium in their group when equilibrium is threatened.  Incorporating this insight about silly rules into AI design should allow human normative systems to be more legible to AI.  

Our paper is presented as follows.  We first illustrate the concept of silly rules an example drawn from a concrete environment.  We then develop a model of groups, based on \cite{hadfield2012law}, in which a group of agents announces a set of rules and relies exclusively on voluntary third-party punishment by group members to police violations. We first show formally that, if silly rules are costless, groups with more silly rules achieve higher payoffs. We then consider the case in which following and punishing silly rules is costly and present the results of our simulations.  Our results demonstrate that groups with lots of (sufficiently cheap) silly rules are more robust:  they are able to maintain more of their population and are less likely to collapse than groups with fewer silly rules in response to an unfounded shock to beliefs about the proportion of punishers.  Groups with lots of silly rules are also more adaptable: they collapse more quickly when there is a true drop in the proportion of punishers below the threshold that makes group membership valuable.  

Our contributions are threefold.  First, we present a formal model that can account for the presence of silly rules in a normative system and show the conditions under which silly rules are likely to exist. This is a contribution to normative theory in human groups. Second, this work provides an example of the importance of building predictive models of human normative systems \emph{qua} systems--not merely predicting the presence of particular norms, which is the dominant approach taken in the growing literature on AI ethics and alignment. Third, we demonstrate that standard AI methods can be valuable tools in analyzing human normative systems.

\section{What are Silly Rules? A Thought Experiment from Ethnography}
One of the challenges of building models of human normativity is that as researchers we are all participants in our taken-for-granted normative environments and this can make it hard to study norms scientifically  \cite{Haidt2008}. To attempt to overcome this, we motivate our work with an example drawn from an ethnography of a group that engages in practices far removed from the worlds in which AI researchers live.  Moreover, we will use the shocking label "silly rules" in order to illuminate an overlooked distinction in the context of the existing literature on normativity. Most of the social science of norms focuses on functional accounts of particular norms such as norms of reciprocity, fair sharing of rewards, or non-interference with property. These accounts argue that particular norms evolve because they support human cooperation and thus improve fitness \cite{boyd2009culture,tomasello2013review} or solve coordination games \cite{Sugden1986,McAdams:2005,Myerson:2004,McAdams2015} for example. Our work highlights the systemic functionality of rules that, individually, have no direct functionality. All human societies, we will show, are likely develop silly rules, for good functional reasons.  

Suppose that an AI system were tasked with learning how to make arrows by observing the Aw\'a people of Brazil. The Aw\'a are hunter-gatherers now living in relatively small numbers on reservations established by the Brazilian government. One of the things the AI will observe, like the ethnographers who have studied these people, is that the men of the Aw\'a spend four or more hours a day making and repairing arrows \cite{Gonzalez2011}. They are produced in large quantities and need frequent repair. They are between 1.4 and 1.7 meters in length, customized to the height of their owner. Bamboo collected to make the points is sometimes shared but the arrows themselves are not; they are buried with their owner. The men use only dark, not brightly colored feathers. All parts of the arrow \textemdash shaft, point, and feathers \textemdash are smoked over a grill during preparation and the arrows themselves are kept warm in smoke at all times unless they are bundled and put in storage in the rafters of a hut. 
 Will the AI system reproduce all of these arrow-making behaviors? We can imagine that AI designed with principles of inverse reinforcement learning \cite{ng2000algorithms} might discern which behaviors actually contribute to the functionality of the arrows--which is presumably what the human designer intended \cite{Hadfield-Menell:2017b}. According to the human ethnographers who observed the Aw\'a, many of the arrow-making practices are not functional. Even if smoking the wood used in the shaft of the arrow during manufacture contributes to a harder, straighter arrow, smoking the feathers seems unnecessary, as does ensuring the arrows are kept warm at all times. Moreover, the men make and carry many more arrows than they will use. In one season, a total of 402 arrows were carried on 9 different foraging trips; 9 were used. Most game on these trips was shot with a shotgun \cite{Gonzalez2011}.
 
 An AI system that ignored the non-functional arrow-making behaviors, however, would be violating the norms of the Aw\'a people. The arrow-making practices described above are not just practices; they are rules.  They reflect normative expectations \cite{bicchieri2006grammar}. How do we know? The lack of functionality is one clue: the Aw\'a presumably have also discovered that a cold arrow works and that they spend a lot of time making arrows that go unused and are damaged by being bundled and carried around. But the better evidence comes from how they respond to the only man who makes his arrows differently.  This man is mocked: his arrows are exceedingly long (2.3 m) and he uses brightly colored feathers. He is ``the only man who does not socialize with the rest of the village." His strange arrows ``are another sign of his loss of `Aw\'a-ness'" \cite{Gonzalez2011}. 
 The Aw\'a's rules are normative, moral principles: bright colored feathers are used only by women to prepare headbands and bracelets used by men in religious rituals and are associated with the world of spirits and ancestors; the making of fire and cooking are associated with masculinity and divinity. An AI system that violated these rules in the pursuit of arrow-making would not be aligned with the moral code of the Aw\'a.

We call the non-functional rules "silly rules".  We emphasize that silly rules are not ``silly" to the groups that follow them. They can have considerable meaning, as they do to the Aw\'a.  Our results will show why silly rules can be very important to the overall welfare of a group and hence the subject of intense concern by group members.

\section{Model}

Our model is based on a framework developed in \cite{hadfield2012law}.  We characterize a set of agents as a group defined by a fixed and common knowledge set of rules. A rule is a binary classification of alternative actions that can be taken in carrying out some behavior. Actions are either ``rule violations" or ``not rule violations". 

Members of this group engage in an infinite sequence of interactions, each of which is governed by a rule drawn randomly from the ruleset. Each interaction is composed of a randomly selected pair of agents and a third actor, whom we will call a scofflaw, who will choose either to comply with the governing rule or not.  (For tractability reasons, we do not model the scofflaws as group members.) One of the agents is randomly designated as the victim of the rule violation; the other is a bystander. If a rule is an \emph{important} rule, the victim incurs a benefit if the rule is enforced and incurs a cost if not. If a rule is a \emph{silly} rule, the victim incurs no benefit from the scofflaw's compliance with the rule and no cost from violation.

Group members are of two types in the bystander role: punishers, who always punish a rule violation, and non-punishers, who never punish. We assume that groups members signal whether they are punishers by paying a signaling cost in each interaction before a potential violation occurs. (\cite{boyd2010coordinated} show that signaling punisher status supports an evolutionarily stable equilibrium in which non-punishers cannot free-ride on punisher types.)   The scofflaw complies with the selected rule if the bystander is a punisher and violates it otherwise. There is no punishment in equilibrium, but the model can be seen as assuming that victims always punish but punishment is only effective when bystanders punish as well.

Prior to each interaction, group members have an option to quit the group and take a risk-free payoff. We formalize this setup as follows.

Each interaction is a game $g$ and we define the sequence for a group as a tuple: $\langle G, T_{\theta}, \Pi, U,
\gamma, c\rangle$ where $G$ is a distribution over games and
$T_{\theta}$ is a distribution over punishment types $t$ in the group, where $t=1$ if an agent is a punisher and $t=0$ if not. The proportion of punishers is given by $\theta \in{[0,1]}$. 
For the tractability of our agent models, we treat $T_\theta$ as a static distribution, and assume agents do likewise, even though it is subject to change as individuals leave the group. 

We will abuse notation somewhat and use $T$ and
$G$ to refer to the support of the corresponding distributions where
the meaning is obvious. $\Pi$ is each agent's prior distribution over the
parameters of $T_{\theta}$, and
$U~:~G~\times~T_{\theta}\rightarrow~\mathbb{R}$ is a mapping from
types and games to immediate payoffs for the agents. $\gamma$ is each agent's discount parameter for
future rewards. $c$ expresses a participation cost. This can be
understood as the expected cost of an agent in the bystander role to signal
that she is a punisher to the other agent in an interaction.\

Every agent begins in period 1 with perfect knowledge
of how actions are classified, all payoffs, and the distribution
of games. The agents do not know the distribution of types in the group, but they do hold a prior which we will specify shortly. The agents update their beliefs about the distribution of types using Bayes' rule. The super game is
defined as follows:
\\ For each period $j$:
\begin{enumerate}
  \item Each agent chooses whether to participate or not. If an agent opts out,
     she collects $0$ payoff. 
  \item All agents that opt in are matched with another agent at random. A game $g_{j+1} \sim G$ is drawn for each of the agent pairings. 
  \item Punishers incur a cost $c$ to signal that they will punish violations. All players observe these signals. 
    \item In each pairing one agent is randomly assigned the role of victim, $V$, and the other the bystander, $B$. All players observe the result of this random assignment.
  \item All players learn whether the game is a silly or important game.
  \item If $B$ is a punisher, the scofflaw complies with the rule. Otherwise, the scofflaw violates the rule.
  \item Victims and bystanders collect payoffs given by $U_V(g_j, t_B, t_V)$ and $U_B(g_j,t_V,t_B)$.
\end{enumerate}
Agents that play in the bystander role in any game incur no benefit; they incur the cost $c$ if they are a punisher and $0$ if not. Agents that play in the victim role receive a payoff of $0$ in games governed by a silly rule.  In games governed by an important rule they receive a positive reward $R$, if $B$ is a punisher and a negative reward, $-R$, if $B$ is not.\par
We formalize the set of important games as follows:
\begin{align*}
G' = \{g \in G| U(g, \cdot) \ne 0\}\\
U_V(g, t_O, t_V|g\in G') =(2t_O-1)R-(2t_V-1)c \\
U_O(g,t_O, t_V|g\in G') = (2t_O-1)c
\end{align*}
We will use $\EU = \mathbb{E}_{g,t_O}[U_V(g, t_O,t_V)|g\in G']$ to denote the
expected utility of an important game. We let $d$ denote the
\emph{density} of the process generating games: the probability of a silly game. $$d = 1 - P(g \in G'); g\sim G.$$ Note that a super-game has high density ($d$ close to 1) when silly rules are a large fraction of the ruleset.
\par

Critically, we ensure that the density of silly games does not alter the (expected) rate at which important games are presented to the agents. Rather than take the place of important interactions, in our model silly interactions increase the \emph{total number} of interactions happening in the same time frame. To be
concrete,  we assume the expected discounted reward obtained from
important games is independent of $d$. This condition can be attained
through a suitable modification of $\gamma$ as a function of $d$:
\begin{prop}
Setting $$\gamma_d = 1-(1-d)(1-\gamma)$$ ensures that the expected sum
of discounted rewards from important games is independent of $d$\footnote{$\mathbb{I}[\psi]$ is the indicator function for
    the condition $\psi$.}.
\end{prop}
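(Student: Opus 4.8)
The plan is to evaluate the expected discounted stream of important-game payoffs directly as a geometric series and check that the proposed $\gamma_d$ is precisely the value that removes the dependence on $d$. Write $W(d)$ for the expected sum of discounted rewards from important games in a super-game with silly-game density $d$ and discount factor $\gamma_d$; formally $W(d) = \mathbb{E}\big[\sum_{j\ge 1}\gamma_d^{\,j-1}\,\mathbb{I}[g_j\in G']\,U_V(g_j,t_O,t_V)\big]$, the expectation being over the i.i.d.\ draws $g_j\sim G$ and over the types.

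First I would apply linearity of expectation together with the fact that the games are drawn i.i.d.\ across periods: in each period $\mathbb{I}[g_j\in G']$ has mean $P(g\in G')=1-d$, and conditioned on that period's game being important the expected victim payoff is exactly $\EU$ by definition of $\EU$. Hence each period contributes $(1-d)\,\EU$ in expectation, giving \[ W(d)=(1-d)\,\EU\sum_{j\ge 1}\gamma_d^{\,j-1}=\frac{(1-d)\,\EU}{1-\gamma_d}. \] Substituting $\gamma_d=1-(1-d)(1-\gamma)$ yields $1-\gamma_d=(1-d)(1-\gamma)$, so $W(d)=\frac{(1-d)\,\EU}{(1-d)(1-\gamma)}=\frac{\EU}{1-\gamma}$, which is independent of $d$ and coincides with the $d=0$ baseline. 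One checks along the way that $\gamma_d\in[0,1)$ whenever $\gamma\in[0,1)$ and $d\in[0,1)$, so the series converges and $\gamma_d$ is a valid discount factor.

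I do not expect a genuine obstacle here: the core of the argument is a single geometric-series identity. The only point requiring care is the modeling convention stressed in the text — that silly games do not crowd out important games but are interleaved, so that each period is independently important with probability $1-d$. This is exactly what makes the per-period expected important-game reward equal $(1-d)\,\EU$ and leaves the discount factor as the sole channel through which $d$ influences $W(d)$. Once that convention is granted, requiring $W(d)$ to match the baseline $\EU/(1-\gamma)$ pins down $\gamma_d$ uniquely, and the formula in the statement is precisely that solution.
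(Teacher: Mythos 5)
Your proof is correct and follows essentially the same route as the paper's: linearity of expectation plus independence of the i.i.d.\ game draws gives a per-period contribution of $(1-d)\EU$, the geometric series yields $(1-d)\EU/(1-\gamma_d)$, and equating this with the baseline $\EU/(1-\gamma)$ pins down $\gamma_d$. Your added check that $\gamma_d\in[0,1)$ is a small but welcome extra the paper omits.
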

\begin{proof} See appendix. \end{proof}

It can be easily shown that this constrained model describes an \emph{optimal stopping problem}~\cite{gittins2011multi}. Each agent in our model must choose between participating, in which case they get an unknown reward and learn about the enforcement equilibrium in the community, or opting out, in which case they stop participating get a constant reward of 0. A classic result from the literature on optimal stopping tells us that, in the optimal policy, if the agent opts out once it will opt out for the rest of time. This is because the agent's information state does not change when it chooses to opt out, so if it was optimal to stop at time $t-1$, it will also be optimal to stop at time $t$. Thus, we refer to the decision not to participate at any point, then, as a decision to retire.
 \begin{figure*}
    \centering
    \centerline{\includegraphics[width=.8\textwidth]{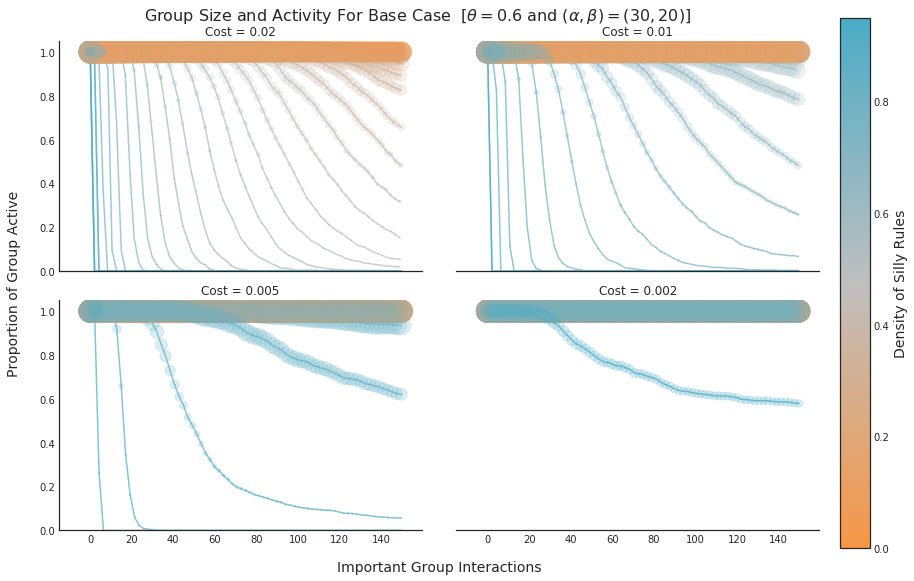}}
    \caption{The y-axis represents the proportion of the 1000 groups with at least 2 individuals left, where the x-axis represents time in terms of the number of expected important interactions per agent. The size of the bubbles signifies the average size of the 1000 groups at the given point in time. 40 linearly spaced values between $0.0$ and $0.95$ were used for silly rule density, and the graphs for each setting are colored accordingly. For cost 0.02, we see that high density (blue) groups collapse rapidly, while the lowest density groups (orange) sustain their size for the duration of the experiment. As the cost comes down, higher density groups start to survive.}
    \label{fig:Hypo1}
\end{figure*}

These problems broadly fall under the class of \emph{partially observed Markov decision processes} (POMDP)~\cite{sondik1971optimal}. In a POMDP the optimal policy only depends on the agent's \emph{belief state}: the agent's posterior distribution over the hidden state of the system. In this case, this is a distribution over the enforcement likelihood in the community. We give our agent a beta prior over this parameter so that the belief space for our agent is a two dimensional lattice equivalent to $\mathbb{Z}_+^2$. Initially, the belief state is $(\alpha_0, \beta_0)$.  The probability that the bystander is a punisher in the first game is $$p_{\alpha\beta} =
\frac{\alpha}{\alpha+\beta}.$$ Once the games begin, agents update their prior beliefs using Bayes' rule, adding the counts of punishers and non-punishers observed to the prior values. In the following, we will use $\alpha_i (\beta_i)$ to represent the number of punishers (non-punishers) observed prior to round $i$.


\section{Theoretical Analysis: The Value of Dense Normative Structure}

Consider first the case in which the signaling cost, $c$, is zero. In this case, punishers only face a risky choice when they are assigned to the victim role in an important game. In all other periods, the per-period expected payoff of playing the risky arm is a constant 0. 

Intuitively, the benefit of higher density of unimportant games is that the agent is in a more information rich environment. In general, this benefit trades off with the cost of signalling. However, when the signalling cost is 0, a higher density is strictly better. One way to show this is to consider the \emph{value of perfect information} (VPI): the additional utility an agent can get in expectation when it has full information compared with the expected utility with partial information ~\cite{Russell+Norvig:2010}.  We can show that, in the limit as density goes to 1, VPI goes to 0; high density of unimportant games essentially removes the agents' uncertainty over the proportion of punishers. 

\begin{prop} If the participation cost, $c$, is 0, then, for any belief state, \abi, and discount rate $\gamma$, the corresponding VPI goes to zero as density goes to 1. That is
\begin{equation}
\lim_{d\rightarrow 1}VPI(\abi; d, \gamma) = 0
\label{eq:vpi-limit}
\end{equation}
\label{prop:vpi}
\end{prop}

\begin{proof} (Sketch; see Appendix for details.) We consider a policy such that the agent participates in order to observe $\tau(d)$ interactions. After $\tau(d)$ observations, it uses its best estimate of the probability of enforcement to decide if it should leave. It doesn't reconsider retiring or rejoining afterwards. We show that this stopping time function can be chosen so that the expected number of important games goes to 0, so it doesn't lose utility in expectation, while the total number of interactions (including silly games) goes to $\infty,$ so it makes the retirement decision with perfect information in the limit. 
\end{proof}

It is straightforward to show that VPI is strictly positive as long as $P(V(\theta) >0) > \epsilon > 0$ and $P(V(\theta) < 0) > \epsilon > 0$ for some finite epsilon. Combined with our proposition, this means that environments with more silly rules will be higher value to agents; as the density of silly rules goes to 1, we can neglect the utility lost due to partial information about the proportion of punishers. Where participation costs can be neglected, an agent will prefer an environment with lots of silly rules.

\section{Monte Carlo Experiments}
\begin{figure}
    \includegraphics[width=3in]{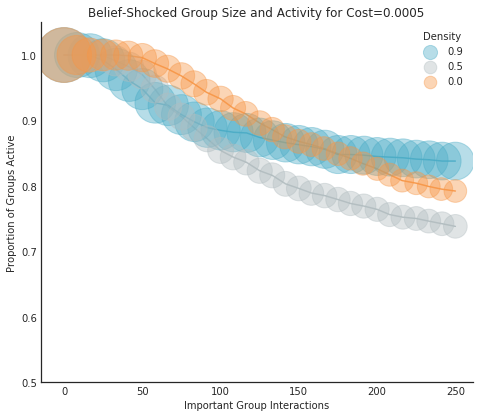}
    \caption{Using the same graphical representation as Figure 1, here we see a comparison of the robustness between groups of groups at 3 different density values. As the number of interactions increases, the survival rate and average population size of groups with higher silly rule density surpasses those of lower density groups.}
    \label{fig:Hypo2}
\end{figure}

To test the benefit of silly rules in groups composed of our previously defined agents, we constructed a series of simulation-based experiments in which we manipulated the density of silly rules, cost of signaling, distribution of punishers, and prior beliefs about the punisher distribution. Each simulation was carried out in a group of 100 agents, each given the type of punisher or non-punisher. The simulations were broken down into discrete periods, or \textit{group interactions}, in which each individual was matched with another and engaged in an interaction, silly or important. We set the reward for the victim in an important game to $+1$ in the case in which the bystander is a punisher, and $-1$ if the bystander is not. Note that given the symmetry in gains and losses in important games, continued participation in the group is valuable if the likelihood that a bystander in an important game is a punisher is greater than $.5$ Given the density-adjusted discount factor for each simulation, the expected reward of 10 interactions in a $d=0.9$ environment would be equivalent to that of 1 interaction in a $d=0.0$ environment. This allows us to normalize the periods into \textit{timesteps}, where 1 \textit{timestep} is equal to $\frac{1}{1-d}$ \textit{group interactions}. 

\begin{figure}
    \includegraphics[width=3.3in]{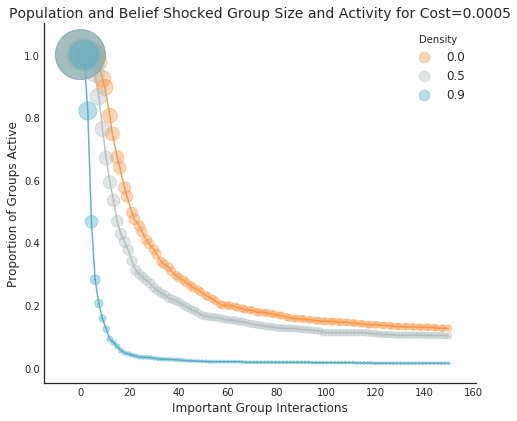}
    \caption{Using the same graphical representation as Figure 1 and 2, here we see a comparison of the adaptability between groups of groups at 3 different density values. Given the unstable conditions of the scenario, higher adaptability corresponds to faster a collapse rate. Groups with high density are seen to collapse much faster than those with low density.}
    \label{fig:Hypo3}
\end{figure}

We are interested in the size of groups over time, as agents make decisions about whether to remain in the group or not given their interaction experience, as a function of the density of silly rules. 

To establish a base case simulation, we first consider the case in which there is low uncertainty about the proportion of punishers in the group.  We can think of this as the case in which a stable group has engaged in interactions over a long period of time and all agents have many observations of the proportion of punishers in the group. We set $\theta$, the punisher proportion, to be 0.6 and the alpha-beta prior of the agents to $30:20$, which implies high confidence in the agents' estimate of $\theta$. Note that with this ground truth, group membership is valuable, generating an expected payoff higher than the alternative of $0$. We then run the simulation for $1000$ groups on the full factorial of $20$ logistically scaled signaling costs, $c$, and $20$ linearly scaled densities, $d$. Doing so confirms our first hypothesis (see Figure 1):

\textit{Hypothesis 1: When uncertainty about the proportion of punishers in a group is low, the likelihood that a group loses members and the likelihood the group collapses increases with the density of silly rules.}

In this case, as the frequency of the cost of signaling that one is a punisher in silly interactions increases it begins to outweigh the possible rewards from important interactions. We see that as cost goes up, groups with higher density of silly interactions shrink and collapse more frequently than those with low density.  This confirms the intuition that silly rules are costly if they serve no information function.

\subsection{Group Robustness}
Having established a baseline, we investigate the benefits of silly rules for a group by considering different scenarios that will stress test the robustness and adaptability of a group. The first case we consider is one in which the individuals' beliefs in a stable group are shocked, lowering their confidence in the proportion of punishers. Concretely, this involves setting the beta priors to $1.2:0.8$ instead of $30:20$. Our hypothesis for the belief-shock scenario is as follows:

\textit{Hypothesis 2: For sufficiently low signaling cost, a higher density of silly rules increases a group's resilience to shocks in individuals' beliefs about the distribution of punishers.}

As shown in Figure 2, in settings with low signaling cost, high density allows for quick stabilization and strong individual retention. Around $75\%$ of groups with $0.9$ density persist after $250$ \textit{timesteps}, with an average population of $~50$. Compare this to the lower density groups, where the groups that survive lose most members before stabilizing.

\subsection{Group Adaptability}
To test adaptability, we imagine an alternative scenario in which the shock to individuals' beliefs is accompanied by a change in the ground truth about the proportion of punishers. Concretely, we change the beta prior to $1.2:0.8$ once again, and set $\theta$ to be $0.4$. With fewer punishers than non-punishers, participation in the group generates a negative expected payoff, and agents would do better to leave the group.  Put differently, the group's ruleset is no longer generating value for group members.  In a negative-value group such as this, we define the adaptability of the group to be the rapidity of collapse. Our hypothesis for this scenario is as follows:

\textit{Hypothesis 3: For sufficiently low signaling cost, a higher density of silly rules allows for faster adaptation to negativer shocks in the distribution of punishers in a group.}

Looking to Figure 3, we find support for this hypothesis in the experiments. After only a few \textit{timesteps}, we see that the high density groups are mostly collapsed, whereas the lower density groups take quite a bit longer to peter out.

\section{Discussion}  

Our results show that silly rules help groups adapt to uncertainty about the stability of social order by enriching the information environment.  They help participants in these groups track their beliefs about the likelihood that violations of important rules will be punished, and thus the likelihood that important rules will be violated.  These beliefs are critical to the incentive to invest resources in interaction.   

We focus on the punisher type of bystanders because third-party punishment is the distinctive feature of human groups \cite{riedl2012chimps,tomasello2013review,buckholtz2012roots}; it extends the range of actions that can be deterred from those deterred by the reactions of the victim alone to those that can be deterred by group punishment \cite{boyd1992punishment}.

What are the lessons for AI alignment research? The goal of AI alignment is the goal of building AI systems that act in ways consistent with human values.  For groups of humans, this means (at least) values reflected in rules of behavior. Discerning values from rules is complex: some rules reflect important values, such as honoring a promise or avoiding harm.  Others do not reflect values that are important \emph{per se}. For an AI system to make good inferences and predictions from observing normative behavior, then, it will need to distinguish between important rules and  silly rules.

Failing to make this distinction could lead to at least two key inferential errors. One error would be to treat important and silly rules as equally likely to vary over time and place. But important rules, because they promote functionality in human interactions, are likely to vary only when there is some causal reason.  
Silly rules, on the other hand, can vary for any reason, or none.  
Modelling the distinction between silly and important rules is essential to accurately learning rule systems.  An AI system that lacks this distinction will over-estimate the likelihood of encountering certain types of normative behavior--with respect to dress codes, for example--while under-estimating the likelihood of others, such as speeding rules. 

A second error that could result from a failure to distinguish between important and silly rules is that an AI system is likely to treat all rules that it sees enforced as equally important to human values. This would produce a good solution in ordinary circumstances. But this will produce a poor solution in circumstances in which it would be very costly to comply with all the rules. If an AI system treats all of rules as equally important to humans, it will presumably economize equally across the rules.  But the better solution is to prioritize important rules and compromise on silly rules. 

The distinction between silly and important rules also raises a question for work on human-robot interaction:  how important is it for an AI system to help enforce silly rules? Our model brings out a legibility function in silly rules--they make it easier for agents in a group that depends on third-party enforcement to discern the stability of the rules in light of uncertainty generated by changes in population or the environment.  If artificial agents are interacting in these environments and they don't participate in enforcing silly rules, what impact does that have on the beliefs of human agents? Does the introduction of large numbers of artificial agents who ignore silly rules into a human group (such as self-driving cars into the group of humans driving on highways) have the same impact on the robustness and adaptability of the group as a decrease in the density of silly rules, by reducing the amount of information gained from the opportunity to observe bystander behavior in interactions? Further still, when a robot learns and enforces silly rules, do these seemingly arbitrary norms become reified, fundamentally changing their meaning and reducing their signaling potential? We leave these questions, and others, for future research.


\bibliographystyle{aaai}

\newpage
\section*{Appendix: Proofs}
\setcounter{prop}{0}
\begin{prop}
Setting $$\gamma_d = 1-(1-d)(1-\gamma)$$ ensures that the expected sum
of discounted rewards from important games is independent of $d$:
\begin{align*}
    \forall d, \in [0, 1)\ \ \mathbb{E}_{g_j,
    t_O}\left[\left.\sum_{j=0}^\infty \gamma^j U_V(g_j,t_O,t_V)\right|
    g_j\in G'\right] = \\
    \mathbb{E}_{g_j,t_O}\left[\left. \sum_{j=0}^\infty \mathbb{I}[g_j\in
     G']\gamma_{d}^j U(g_j,,t_O,t_V)\right | d
   \right].
\end{align*}
\end{prop}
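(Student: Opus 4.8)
The plan is to evaluate both sides in closed form and verify they coincide once we substitute $1-\gamma_d = (1-d)(1-\gamma)$. The preliminary step is to justify interchanging the infinite sum with the expectation on each side: under the stated payoff functions the per-interaction payoff is bounded in absolute value (by $R+c$, since each type indicator $2t-1\in\{-1,1\}$), and $\gamma,\gamma_d\in[0,1)$, so each series is dominated by a convergent geometric series and Tonelli / dominated convergence applies. After this exchange both sides become term-by-term sums of marginal expectations, and no further independence structure is needed.

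For the left-hand side, recall that games and types are drawn i.i.d.\ across periods, so conditioning the entire sequence on $g_j\in G'$ for every $j$ is equivalent to drawing each $g_j$ independently from $G$ restricted to $G'$. Hence every term has expectation $\mathbb{E}_{g_j,t_O}[U_V(g_j,t_O,t_V)\mid g_j\in G']=\EU$ by definition, and the left-hand side equals $\EU\sum_{j=0}^\infty\gamma^j=\EU/(1-\gamma)$. (The argument is pointwise in the agent's own type $t_V$, which is the only parameter $\EU$ still depends on.)

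For the right-hand side I would compute the expectation of a single summand directly, \emph{without} conditioning: the event $\{g_j\in G'\}$ has probability $1-d$, and given that event the expected payoff is $\EU$, so $\mathbb{E}_{g_j,t_O}\big[\mathbb{I}[g_j\in G']\,U_V(g_j,t_O,t_V)\big]=(1-d)\,\EU$ for every $j$. Summing the geometric series in $\gamma_d$ gives $(1-d)\,\EU\sum_{j=0}^\infty\gamma_d^{\,j}=(1-d)\,\EU/(1-\gamma_d)$. The final step is the one-line substitution $1-\gamma_d=(1-d)(1-\gamma)$, which turns this into $(1-d)\,\EU/\big[(1-d)(1-\gamma)\big]=\EU/(1-\gamma)$, exactly the left-hand side.

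There is no deep obstacle: the computation is essentially two geometric sums and a substitution. The only points that need genuine care are (i) the interchange of expectation and infinite summation, which rests on boundedness of the payoffs together with $\gamma_d<1$ (so one should check $\gamma_d\in[0,1)$ for all $d\in[0,1)$, which follows from $0<(1-d)(1-\gamma)\le 1$), and (ii) the bookkeeping observation that conditioning the whole game sequence on importance is the same as a per-period i.i.d.\ conditioning, which is what collapses the left-hand side to a clean geometric series.
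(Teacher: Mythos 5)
Your proposal is correct and follows essentially the same route as the paper: both reduce each side to a geometric series, obtaining $\EU/(1-\gamma)$ on the left and $(1-d)\,\EU/(1-\gamma_d)$ on the right, and then verify equality by substituting $1-\gamma_d=(1-d)(1-\gamma)$. The extra care you take (dominated convergence for the sum--expectation interchange and checking $\gamma_d\in[0,1)$) is a minor tightening of the paper's appeal to linearity and the i.i.d.\ structure, not a different argument.
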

\begin{proof} 
We first show that it is sufficient to ensure that the expected value
of $\gamma_d^{j}$ is the same given that $j$ is a round with an
important game:
\begin{align*}
&\mathbb{E}_{g_j, t_O}\left[\left. \sum_{j=0}^\infty \mathbb{I}[g_j \in G'] \gamma_{d}^j U(g_j,t_O,t_V)\right | d \right] \\ 
&= \sum_{j=0}^\infty \mathbb{E}_{g_j, t_O}\left[\left. \mathbb{I}[g_j\in G'] \gamma_{d}^j
  U(g_j,t_O,t_V)\right | d \right] \\ 
&= \sum_{j=0}^\infty \gamma_d^j
\mathbb{E}_{g_j, t_O}[U(g_j, t_O, t_V) | d, g_j \in G']\mathbb{E}_{g_j}\left[\mathbb{I}[g_j\in G'] | d \right]  \\
&=(1-d)\EU \sum_{j=0}^\infty \gamma_{d}^j
\end{align*}
where the first line holds by the linearity of expectation, the
fact that $g_j$ is an independent iid draw from a stationary
distribution, and the constraint on the agent's beliefs that $t_O$ is a also an independent iid draw from a stationary distribution. Substituting the form of the infinite geometric series,
we see that
\begin{equation}
\frac{\EU}{1-\gamma} = \frac{(1-d)\EU}{1-\gamma_d}
\end{equation}
is sufficient to achieve our goal. Substituting the form for
$\gamma_d$ in the theorem statement and reducing shows that this
condition is satisfied.
\end{proof}

\begin{prop} If the participation cost, $c$, is 0, then, for any belief state, \abi, and discount rate $\gamma$, the corresponding VPI goes to zero as density goes to 1. That is
\begin{equation}
\lim_{d\rightarrow 1}VPI(\abi; d, \gamma) = 0
\label{eq:vpi-limit}
\end{equation}
\label{prop:vpi}
\end{prop}

\begin{proof}
Let $V(\theta)$ be the expected value of participating forever, given $\theta$. 
The optimal full information policy will retire whenever
$V(\theta) < 0$ and has value $V_+(\theta) = \max\{V(\theta), 0\}$. VPI is
the difference between the expected value of $V_+$ and the value of
the optimal partial information policy $V(\abi; d, \gamma)$:
\begin{gather}
\nonumber VPI(\abi; d, \gamma) = \\  
 \mathbb{E}\left[\left.V_+(\theta) \right | \abi\right] - V(\abi; d, \gamma)
\label{eq:vpi}
\end{gather}

We proceed by lower bounding $V$. $V$ is the value of the optimal
policy so it is weakly lower bounded by any arbitrary policy. We consider a policy that participates for 
\begin{equation}
    \tau(d) = \frac{1}{\sqrt{1-d}}\label{eq:tau}
\end{equation}
rounds and then retires if $\mathbb{E}[V(\theta)] < 0$. This choice of $\tau$ ensures that
\begin{align}
    \lim_{d\rightarrow 1} \tau(d) &= \infty; \label{eq:tau-infty}\\
    \begin{split}
    \lim_{d\rightarrow 1} \sum_{t<\tau(d)} P(g_t \in G) &= \lim_{d\rightarrow 1}\frac{1-d}{\sqrt{1-d}} \\
    &= 0 .\label{eq:no-important}
    \end{split}
\end{align}

\eqref{eq:tau-infty} ensures that, as density goes to 1, then agent's estimate of participation value when it decides, $\mathbb{E}[V(\theta)]$ converges to $V(\theta)$ by consistency. \eqref{eq:no-important} ensures that the expected number of important games (and thus opportunities to lose utility against a full information agent) goes to 0. This is sufficient to show that 
\begin{equation}
\lim_{d\rightarrow 1} V(\abi; d, \gamma) = \mathbb{E}[V_+(\theta)|\abi ]
\end{equation}
which shows the result. 
\end{proof}

\end{document}